\documentclass[12pt]{amsart}

\usepackage{epsf,amssymb,amsmath,overpic,amscd,psfrag,stmaryrd,times,mathrsfs, euscript,thmtools, amsbsy, multicol, mathtools}

\usepackage{caption}
\usepackage{subcaption}
\usepackage[all]{xy}
\usepackage{hyperref, stmaryrd}
\usepackage[dvipsnames]{xcolor}
\usepackage{enumitem, bbm}
\usepackage{tikz,tikz-cd}
\usepackage{array}
\usepackage{setspace}
\usepackage{diagbox}

\tikzcdset{scale cd/.style={every label/.append style={scale=#1},
    cells={nodes={scale=#1}}}}

\usepackage{todonotes}
\usetikzlibrary{decorations.markings}

\topmargin-0.35in       

\textheight9.2in

\oddsidemargin-0.58cm         

\evensidemargin-0.58cm        

\textwidth6.89in

\newtheorem{theorem}{Theorem}[section]

\newtheorem{lemma}[theorem]{Lemma}

\theoremstyle{definition}
\newtheorem{definition}[theorem]{Definition}
\newtheorem{example}[theorem]{Example}

\newtheorem{remark}[theorem]{Remark}

\numberwithin{equation}{subsection}

\DeclareMathAlphabet{\mathpgoth}{OT1}{pgoth}{m}{n}

\DeclareMathAlphabet{\mathpzc}{OT1}{pzc}{m}{it}


\newcommand{\R}{\mathbb{R}}
\newcommand{\Z}{\mathbb{Z}}

\newcommand{\be}{\begin{enumerate}}
\newcommand{\ee}{\end{enumerate}}
\newcommand{\op}{\operatorname}

\newcommand{\iT}[1]{#1}




\newcommand{\chartData}[1]{\mathcal{C}_{\iT{I}}}











\newcommand{\map}{\op{Map}}

\newcommand{\hF}{\widehat{F}}

\newcommand{\hX}{\widehat{X}}
\newcommand{\hf}{\widehat{f}}

\makeatletter
\newcommand{\labitem}[2]{%
\def\@itemlabel{#1}
\item
\def\@currentlabel{#1}\label{#2}}
\makeatother

\begin{document}

\title[Equivariant Neural Networks and Equivarification]
{Equivariant Neural Networks and Equivarification}

\author{Erkao Bao}
\address{School of Mathematics, University of Minnesota, Minneapolis, MN 55455}
\author{Jingcheng Lu}
\author{Linqi Song}
\author{Nathan Hart-Hodgson}
\author{William Parson}
\author{Yanheng Zhou}

\keywords{equivariant, equivarification, group, neural network, group action, universal property, group convolution, symmetry}

\thanks{This project is partially supported by NSF Grants DMS-2404529.}

\begin{abstract}
Equivariant neural networks are a class of neural networks designed to preserve symmetries inherent in the data. In this paper, we introduce a general method for modifying a neural network to enforce equivariance, a process we refer to as \emph{equivarification}. We further show that group convolutional neural networks (G-CNNs) arise as a special case of our framework.
\end{abstract}

\maketitle

\setcounter{tocdepth}{2}


\section{Introduction} \label{sec: introduction}

Equivariant neural networks are a class of architectures designed to respect symmetries inherent in the data. By symmetry, we mean the action of a group (see Definition~\ref{def:group action}) on the input space, such as rotations or reflections of images. Originating from the works on Group Convolutional Neural Networks (G-CNNs)~\cite{cohen2016group,cohen2018spherical} and steerable neural networks~\cite{cohen2017steerable,weiler20183d}, and supported by a rich theoretical foundation~\cite{cohen2019general,esteves2020theoretical,lim2023equivariant,gibson2024equivariant}, equivariant architectures have seen rapid development across a broad range of applications, including image classifications \cite{bernander2021replacing,bernander2022rotation,pandya20232,qiao2023scale}, physics modeling \cite{bogatskiy2020lorentz,batzner20223,favoni2022lattice, gong2023general,yuan2024equivariant,tang2024deep}, chemical reaction simulations \cite{van2023equireact}, pharmaceutical prediction \cite{cremer2023equivariant}, robotic control \cite{seo2025se}, and natural language processing \cite{hutchinson2021lietransformer,white2022equivariant}. To bridge the gap with large-scale industrial models, recent work has also focused on fine-tuning pretrained networks with approximate equivariance~\cite{basu2023equi,mondal2023equivariant}.

In this work, we focus on equivariant neural networks for classification tasks, where a natural requirement is \emph{invariance} under data symmetries — for instance, an image and its rotated versions should be classified identically. This motivates the use of \emph{equivariant} models, whose outputs transform predictably under group actions. For example, an equivariant image classifier may simultaneously output both the predicted object class and the orientation of the image (e.g., $0^\circ$, $90^\circ$, $180^\circ$, $270^\circ$). A $180^\circ$ rotation of the input should be reflected by a corresponding update in the rotation prediction (e.g., from $90^\circ$ to $270^\circ$), while the object classification remains invariant.

Conventional classifiers treat transformed inputs as unrelated, relying on data augmentation during training to enforce output symmetry. This yields only implicit, approximate equivariance. In contrast, equivariant networks embed the symmetry analytically, making the resulting models not only conceptually principled but also computationally more efficient.

Despite their strong theoretical appeal, both G-CNNs and steerable networks face practical limitations due to their case-by-case constructions. G-CNNs require an explicit redefinition of the convolutional layers on the product space, $X \times G$, of the input and symmetry group, significantly increasing the computational cost and complicating integration with standard backbones. Steerable networks enhance representation capability by incorporating nontrivial group representations to model vector or tensor features, enabling the group to act nontrivially on the feature space. However, this generality entails complexity: kernel parameterizations must be carefully derived for each irreducible representation of the group, leading to intricate, group-specific implementations that are hard to extend to new groups or architectures.

To enable a convenient and modular construction of equivariant classifiers, we propose a general framework called \emph{equivarification}, which transforms \emph{any} feedforward neural network into an equivariant one with respect to a prescribed group action on the input. Specifically, our multi-view-based equivarification offers the following features:
\begin{itemize}
    \item It is fully architecture-agnostic and requires no access to or modification of internal network layers.
    \item When applied globally, the equivarified model retains the \emph{exact same number of parameters} as the original model via parameter sharing.
    \item The network can jointly perform classification and transformation prediction (e.g., identifying the rotation applied to the input image).
\end{itemize}

The rest of the paper is organized as follows. In Section~\ref{sec:prelim of group action}, we introduce the mathematical preliminaries on group actions and equivariant functions. Section~\ref{sec: equivarification} develops the theoretical framework underlying our universal equivarification procedure. Section~\ref{sec: Application to neural networks} provides practical constructions for finite symmetry groups. In particular, we show that the layer-wise equivarification can be viewed as a kernel-sharing variant of G-CNNs. The experimental results are presented in Section~\ref{sec:experiments}, where we compare our method with standard data enhancement approaches.

\section{Preliminaries}\label{sec:prelim of group action}

We now define the key notions of group actions and equivariance.

Let $G$ be a group and $X$ a set.

\begin{definition}[Group Action]\label{def:group action}
A (left) action of $G$ on $X$ is a map
\[
T: G \times X \to X
\]
satisfying:
\begin{itemize}
    \item $T(e, x) = x$ for all $x \in X$, where $e \in G$ is the identity element;
    \item $T(g_1, T(g_2, x)) = T(g_1g_2, x)$ for all $g_1, g_2 \in G$ and $x \in X$.
\end{itemize}
When the action is clear from context, we often omit $T$ and write $gx$ in place of $T(g, x)$. That is, the second condition becomes $g_1(g_2 x) = (g_1g_2)x$.

We say the action is \emph{trivial} if $gx = x$ for all $g \in G$ and $x \in X$.
\end{definition}

Now suppose $G$ acts on two sets $X$ and $Y$.

\begin{definition}[Equivariance and Invariance]
A map $F: X \to Y$ is said to be \emph{$G$-equivariant} if
\[
F(gx) = gF(x)
\quad \text{for all } g \in G,\ x \in X.
\]
If $G$ acts trivially on $Y$, then we say that $F$ is \emph{$G$-invariant}.
\end{definition}

\begin{example}[Equivariance in Image Classification]\label{example: equivarification}
Let $X$ be the input space of all $28 \times 28$ grayscale images in the MNIST dataset, representing handwritten digits from $0$ to $9$. Let $G = \Z/4\Z$ be the cyclic group of order 4, generated by a rotation $g$ of $90^\circ$ counterclockwise. We define the action of $G$ on $X$ by letting $gx$ be the image obtained by rotating $x$ by $90^\circ$.

We set the output space $Y = \{0,1,\ldots,9\} \times \{0, 90, 180, 270\}$, where each output contains both a digit class and an estimated rotation angle. Define the group action on $Y$ by
\[
g(\text{num}, \theta) = (\text{num},\ \theta + 90 \mod 360).
\]
An equivariant neural network $F: X \to Y$ satisfies
\[
F(gx) = gF(x),
\]
meaning that rotating the input image by $90^\circ$ updates the predicted angle accordingly, without changing the predicted digit.
\end{example}

This framework allows us to model each layer in a neural network as a $G$-set, where the group $G$ encodes symmetry transformations and acts on the space $X$ of inputs to that layer. The action may represent concrete transformations of the input (e.g., rotations or reflections), or more abstract symmetries emerging in intermediate layers.

By enforcing equivariance layer by layer, we can systematically build neural networks that respect the symmetries of the task. The same group can be used to describe different types of operations across layers—rotation, flipping, or others—enabling consistent and efficient architecture design.

\section{Equivarification}\label{sec: equivarification}

In this section, we present the core theoretical framework underlying our equivarification method, which allows any map to be lifted to an equivariant one. Readers primarily interested in applications may wish to skip to the examples, which offer intuitive illustrations of how equivariant neural networks are constructed.

Throughout this section, we fix a group $G$ acting on a set $X$, and let $Z$ be any target set.

\begin{definition}\label{main def}
The \emph{$G$-product} of $Z$ is the set of functions from $G$ to $Z$:
\[
\map(G, Z) := \{ s: G \to Z \}.
\]
\end{definition}

We define a natural (left) $G$-action on $\map(G, Z)$ by
\begin{align*}
G \times \map(G, Z) &\to \map(G, Z) \\
(g, s) &\mapsto gs, \quad \text{where } (gs)(g') := s(g^{-1}g').
\end{align*}

There is a canonical projection map
\begin{align}\label{projection}
p: \map(G, Z) \to Z, \quad s \mapsto s(e),
\end{align}
which evaluates a function at the identity element $e \in G$.

\begin{lemma}\label{trivial lemma}
Given any map $F: X \to Z$, there exists a unique $G$-equivariant map $\widehat{F}: X \to \map(G, Z)$ such that
\[
p(\widehat{F}(x)) = F(x), \quad \text{for all } x \in X.
\]
\end{lemma}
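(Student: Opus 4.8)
The plan is to write down the lift $\widehat F$ by an explicit formula and then verify the three things the statement asks for: that it is a well-defined element-valued map into $Z^{\times G}$ lifting $F$, that it is $G$-equivariant, and that it is the unique such map. Concretely, I would define $\widehat F \colon X \to Z^{\times G}$ by declaring, for each $x \in X$, that $\widehat F(x) \colon G \to Z$ is the function $\widehat F(x)(g) := F(g^{-1}x)$. This is a bona fide element of $Z^{\times G}$: since $G$ acts on $X$, the element $g^{-1}x$ is defined, and $F$ may be applied to it.

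First I would check the lifting property. By definition of $p$ and the first action axiom $ex = x$, we get $p(\widehat F(x)) = \widehat F(x)(e) = F(e^{-1}x) = F(x)$, as required. Next, equivariance: for $g, g' \in G$ and $x \in X$, on one hand $\widehat F(gx)(g') = F\bigl(g'^{-1}(gx)\bigr) = F\bigl((g'^{-1}g)x\bigr)$ by the second action axiom; on the other hand, using the definition \eqref{action on functions} of the action on $Z^{\times G}$, $\bigl(g\,\widehat F(x)\bigr)(g') = \widehat F(x)(g^{-1}g') = F\bigl((g^{-1}g')^{-1}x\bigr) = F\bigl(g'^{-1}gx\bigr)$. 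These agree for all $g'$, so $\widehat F(gx) = g\,\widehat F(x)$.

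For uniqueness, suppose $\widehat F'\colon X \to Z^{\times G}$ is any $G$-equivariant map with $p(\widehat F'(x)) = F(x)$ for all $x$. The key observation is that equivariance pins down the value of $\widehat F'(x)$ at an arbitrary $g \in G$. From \eqref{action on functions} one computes $(g^{-1}s)(e) = s(g)$ for any $s \in Z^{\times G}$, so $\widehat F'(x)(g) = \bigl(g^{-1}\widehat F'(x)\bigr)(e)$; equivariance gives $g^{-1}\widehat F'(x) = \widehat F'(g^{-1}x)$, hence $\widehat F'(x)(g) = p\bigl(\widehat F'(g^{-1}x)\bigr) = F(g^{-1}x) = \widehat F(x)(g)$. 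Thus $\widehat F' = \widehat F$.

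There is no real obstacle here; the only point needing care is the bookkeeping of inverses so that \eqref{action on functions} is genuinely a left action and the equivariance identity holds on the nose. I would spell out the identity $(g^{-1}s)(g') = s(gg')$ once, which makes both the equivariance check and the uniqueness argument transparent.
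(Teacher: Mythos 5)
Your proof is correct and follows the same route as the paper: the same explicit lift $\widehat F(x)(g) := F(g^{-1}x)$, the same verification that $p\circ\widehat F = F$, and the same two-sided computation for equivariance. The only difference is that you also spell out the uniqueness argument via $\widehat F'(x)(g) = p\bigl(\widehat F'(g^{-1}x)\bigr)$, which the paper leaves to the reader; your version of that step is correct.
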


\begin{proof}
Define $\widehat{F}(x) \in \map(G, Z)$ by
\[
(\widehat{F}(x))(g) := F(g^{-1}x), \quad \text{for all } g \in G.
\]
To verify equivariance, let $g, h \in G$ and $x \in X$. Then:
\[
(g \cdot \widehat{F}(x))(h) = \widehat{F}(x)(g^{-1}h) = F(h^{-1}g x) = (\widehat{F}(g x))(h).
\]
So $\widehat{F}(g x) = g \cdot \widehat{F}(x)$, as required. Uniqueness follows from the requirement that the lift satisfies $p \circ \widehat{F} = F$ and equivariance.
\end{proof}

\begin{remark}
In Definition~\ref{main def} and Lemma~\ref{trivial lemma}, both $G$ and $Z$ are arbitrary sets. This construction can be generalized to other categories. For example, if $G$ is a compact Lie group and $Z$ is a smooth manifold, one may restrict to smooth maps $\map(G, Z)$. If $G$ is non-compact, compactly supported smooth maps may be more appropriate. 

In practice, especially for neural network implementation, infinite groups must be approximated by finite subsets. Thus, for the remainder of the paper, we assume that $G$ is finite to avoid such technical complications.
\end{remark}

The above lemma is captured in the following commutative diagram:

\begin{figure}[h]
\centering
\begin{tikzpicture}
  \matrix (m) [matrix of math nodes,row sep=3em,column sep=4em,minimum width=2em]
  {
     & \map(G, Z) \\
     X & Z \\};
  \path[->]
    (m-2-1) edge node [below] {$F$} (m-2-2)
    (m-1-2) edge node [right] {$p$} (m-2-2)
    (m-2-1) edge [dashed,->] node [above] {$\exists!\, \widehat{F}$} (m-1-2);
\end{tikzpicture}
\caption{Every map $F: X \to Z$ admits a unique $G$-equivariant lift $\widehat{F}$ such that $p \circ \widehat{F} = F$.}
\label{commutative diagram}
\end{figure}

This motivates the general notion of $G$-equivarification:

\begin{definition}[$G$-Equivarification]\label{lemma: equivarification}
A triple $(\widehat{Z}, T, p)$ is called a \emph{$G$-equivarification} of $Z$ if:
\begin{itemize}
    \item $\widehat{Z}$ is a set equipped with a $G$-action $T$;
    \item $p: \widehat{Z} \to Z$ is a map;
    \item For any $G$-set $X$ and any map $F: X \to Z$, there exists a $G$-equivariant map $\widehat{F}: X \to \widehat{Z}$ such that $p \circ \widehat{F} = F$.
\end{itemize}
We typically omit $T$ from the notation and refer to $\widehat{Z}$ and $p$ as the equivarification of $Z$.
\end{definition}

In Section~\ref{sec: Application to neural networks}, we show how this abstract notion applies to neural networks. Lemma~\ref{trivial lemma} demonstrates that the triple $(\map(G, Z), \text{action}, p)$ forms a canonical equivarification. Other equivarifications exist and are discussed further in the Appendix.

\begin{example}\label{ex:C4}
Let $G = \Z/4\Z = \{e, g, g^2, g^3\}$, the cyclic group of order 4. Then $s\in\map(G, Z)$ can be identified with $Z^4$ via the map:
\begin{equation}\label{eq:C4 tuple}
s \mapsto (s(e), s(g), s(g^2), s(g^3)).
\end{equation}
Under this identification, $G$ acts on $Z^4$ by cyclic permutation:
\[
g \cdot (z_0, z_1, z_2, z_3) = (z_3, z_0, z_1, z_2).
\]
Recall that the projection $p: \map(G,Z) \to Z$ evaluates $s$ at the identity, i.e., $p(s) = s(e)$. Under the tuple representation \eqref{eq:C4 tuple}, this corresponds to projecting to the first coordinate:
\[
p(z_0, z_1, z_2, z_3) = z_0.
\]

Let $F: X \to Z$ be any function. Then the lifted map $\widehat{F}: X \to Z^4$ is given by
\[
\widehat{F}(x) = \left(F(x), F(g^{-1}x), F(g^{-2}x), F(g^{-3}x)\right),
\]
which is easily verified to be $G$-equivariant. Furthermore, $p \circ \widehat{F} = F$.
\end{example}

\section{Application of equivarification to neural networks} \label{sec: Application to neural networks}

In this section, we illustrate how our proposed equivarification method can be applied to construct equivariant neural networks and explain its connection to group convolution.

\subsection{A multi-view construction}
While Lemma \ref{trivial lemma} ensures the existence and uniqueness of an equivariant lift (up to isomorphism), the concrete realization of such lifts in neural network architectures can vary widely. As illustrated in Example~\ref{ex:C4}, any lifted map $s \in \map(G, Z)$ can be identified with its values on the group elements.

As a natural extension, we propose the following multi-view-based equivarification, which constructs an equivariant map by evaluating the original function on group-transformed inputs: 
\begin{equation}\label{eq:equinet global}
    \widehat{F}(x) = \left(F(g_1 x), F(g_2 x), \cdots, F(g_{|G|} x)\right),
\end{equation}
where $|G|$ is the cardinality of $G$, and here we choose an ordering $(g_1, g_2, \dots, g_{|G|})$ of elements of $G$.

We refer to this construction as \emph{global equivarification}. It provides a convenient and architecture-agnostic method to convert any map \( F \in \map(X, Z) \) into an equivariant map \( \widehat{F}: X \to Z^{|G|} \) satisfying
\[
\widehat{F}(g x) = g \cdot \widehat{F}(x), \quad \text{for all } g \in G.
\]
Importantly, the equivarified network retains the same number of trainable parameters as the original model, since all evaluations $F(g_i x)$ share the same set of weights. This weight sharing across transformed inputs ensures parameter efficiency while maintaining equivariance.

Here, \( \widehat{F}(x) \in Z^{|G|} \) is interpreted as a tuple indexed by the group elements. The group action on tuples \( a = (a_{g_1}, \dots, a_{g_{|G|}}) \in Z^{|G|} \) is defined by right multiplication on the indices:
\[
g \cdot (a_{g_1}, a_{g_2}, \dots, a_{g_{|G|}}) = (a_{g_1 g}, a_{g_2 g}, \dots, a_{g_{|G|} g}).
\]

When the symmetry group \( G \) is large, the dimension of \( \widehat{F}(x) \in Z^{|G|} \) may become prohibitively high. In such cases, one can seek a more compact representation via \emph{$\phi$-equivariant} neural networks. We introduce the following notions.

Let $G$ and $H$ be groups, let $\phi: G \to H$ be a group homomorphism, and suppose that $G$ acts on a space $X$ while $H$ acts on a space $Y$.

\begin{definition}
A map $F: X \to Y$ is called \emph{$\phi$-equivariant} if 
\[
\phi(\vartheta)\,F(x) \;=\; F(\vartheta x)
\qquad \text{for all }\vartheta \in G,\; x \in X.
\]
\end{definition}

\begin{lemma}
Let $\phi: G \to H$ be a surjective homomorphism.  
Let $X$ and $Z$ be spaces, and suppose $G$ acts on $X$.  
Given a map $F: X \to Z$, there exists a $\phi$-equivariant lift 
\[
\widehat{F}: X \to \op{Map}(H, Z),
\qquad
p\circ \widehat{F} = F,
\]
where $p: \op{Map}(H,Z)\to Z$ denotes the projection map, if and only if $F$ is $(\ker\phi)$-invariant, i.e.
\[
F(\vartheta x) = F(x)
\qquad
\text{for all }\vartheta \in \ker\phi,\; x \in X.
\]
Moreover, such a lift is unique.
\end{lemma}

Note that $\ker\phi$ is a normal subgroup%
\footnote{A subgroup $N \subseteq G$ is called \emph{normal} if it is invariant under conjugation by elements of $G$, i.e.
\[
gng^{-1} \in N \quad \text{for all }g\in G,\; n\in N.
\]
Equivalently, $N$ is normal if the left and right cosets coincide: $gN=Ng$ for all $g\in G$.} 
and that $G/\ker\phi \simeq H$.%
\footnote{If $N\subseteq G$ is normal, the \emph{quotient group} (or factor group) $G/N$ consists of the left cosets of $N$:
\[
G/N := \{gN \mid g\in G\}.
\]
Multiplication is defined by
\[
(gN)(hN) := (gh)N \qquad \forall g,h\in G,
\]
which is well-defined precisely because $N$ is normal.}

The proof is a straightforward modification of Lemma~\ref{trivial lemma} and is omitted.

\medskip

In practice, starting with a group $G$, one may wish to reduce the group size layer by layer.  
A common approach is to choose a normal subgroup $N\subseteq G$.  
Given a map $F: X \to Z$ that is not $N$-invariant, we can modify it to an $N$-invariant map $F_{\text{inv}}: X \to Z$.  
Since under the $N$-action the space $X$ decomposes as a disjoint union of $N$-orbits,
\[
X \;=\; O_1 \,\coprod\, \dots \,\coprod\, O_k,
\qquad
O_i := \{\varphi x_i \mid \varphi\in N\}\;\text{ for some }x_i\in X,
\]
we need $F_{\text{inv}}$ to be constant on each $O_i$.  
For instance, if $Y$ is a vector space, we may define:

\begin{enumerate}
\item When $Y=\R$, take
\[
F_{\text{inv}}(x) \;=\; \max_{\varphi\in N}\,\bigl|F(\varphi x)\bigr|.
\]

\item Averaging over $N$:
\[
F_{\text{inv}}(x) \;=\; \frac{1}{|N|}\sum_{\varphi\in N}F(\varphi x).
\]

\item More generally, for any function $\rho: G/N \times Y \to Y$,
\[
F_{\text{inv}}(x) \;=\; \sum_{\varphi\in N}\rho\bigl(\pi(\varphi),F(\varphi x)\bigr),
\]
where $\pi: G\to G/N$ is the canonical projection.
\end{enumerate}

\begin{example}[$\phi$-equivariant network on $D_4$]\label{ex: D4 phi-equi}
Let $D_4$ be the dihedral group of order 8, which admits the explicit representation
\[
D_4 = \{e, r, r^2, r^3, s, sr, sr^2, sr^3\},
\]
where, when acting on images, $r$ denotes $90^\circ$ rotation, and $s$ denotes horizontal flip. These elements satisfy the relations $r^4 = e$, $s^2 = e$, and $sr = r^{-1}s$.

The subgroup $C_4 = \{e, r, r^2, r^3\}$ (i.e., $90^\circ-$rotation group) is a normal subgroup 
of $D_4$, and the quotient group $D_4 / C_4$
is isomorphic to $\Z_2 = \{e, s\}$. Define a homomorphism $\phi: D_4 \to \Z_2$ by
\[
\phi(s^m r^n) = s^m, \quad \text{for } m,n \in \mathbb{N}.
\]
Then a $\phi$-equivariant lift of a function $F$ can be constructed by first forming an $C_4$-invariant mapping, such as:
\[
F_{\text{inv}}(x) = \frac{1}{4}\sum_{g \in C_4} F(g x) \quad \text{or} \quad F_{\text{inv}}(x) = \max_{g \in C_4} F(g x),
\]
and then defining the lifted function:
\[
\hF_\phi(x) := \left(F_{\text{inv}}(x),\ F_{\text{inv}}(s x)\right).
\]

Now, for any $g = s^m r^n \in D_4$, we compute:
\[
\begin{aligned}
\hF_\phi(gx)
&= \left(F_{\text{inv}}(s^m r^n x),\ F_{\text{inv}}(s^m r^n s x)\right) \\
&= \left(F_{\text{inv}}(r^{n(-1)^m} s^m x),\ F_{\text{inv}}(r^{n(-1)^m} s^{m+1} x)\right) \\
&= \left(F_{\text{inv}}(s^m x),\ F_{\text{inv}}(s^{m+1} x)\right) \\
&= s^m \cdot \left(F_{\text{inv}}(x),\ F_{\text{inv}}(s x)\right) \\
&= \phi(g)\cdot \hF_\phi(x),
\end{aligned}
\]
where the $\Z_2$ action on a 2-tuple is defined as the flip:
\[
s \cdot (a_0, a_1) = (a_1, a_0), \quad e \cdot (a_0, a_1) = (a_0, a_1).
\]
This verifies that $\hF_\phi$ is $\phi$-equivariant with respect to the group homomorphism $\phi : D_4 \to \Z_2$.
\end{example}

\subsection{Layer-by-layer equivarification}
The global lifting approach \eqref{eq:equinet global} naturally extends to a layer-wise equivarification framework, as illustrated in Figure~\ref{fig:layer by layer}. We focus here on convolutional neural networks for image data. The input layer $X_0 = \widehat{X}_0$ consists of raw input images. For each $i \geq 1$, a feature $\widehat{x}_i \in \widehat{X}_i$ is represented as a $|G|$-tuple of the form:
\[
\widehat{x}_i = \big(\widehat{F}_{i-1}(g_1 \widehat{x}),\, \widehat{F}_{i-1}(g_2 \widehat{x}),\, \cdots,\, \widehat{F}_{i-1}(g_{|G|} \widehat{x})\big), \quad \widehat{x} \in \widehat{X}_{i-1}.
\]
At the input layer, the $G$-action corresponds to physical transformations such as rotations or flips. For subsequent layers $i \geq 1$, the $G$-action is implemented as index permutation over $|G|$-tuples.

Suppose each $x_i \in X_i$ has shape $\mathbb{R}^{C_i \times H_i \times W_i}$, where $C_i$ is the number of channels, and $H_i$, $W_i$ are spatial dimensions. To construct the next layer, we concatenate all components of the tuple $\widehat{x}_i \in \widehat{X}_i$ along the channel dimension and apply a map:
\[
F_i: \mathbb{R}^{|G| \cdot C_i \times H_i \times W_i} \to \mathbb{R}^{C_{i+1} \times H_{i+1} \times W_{i+1}}.
\]
This map is then lifted to an equivariant version:
\begin{equation}\label{eq:equinet layer by layer}
\widehat{F}_i(\widehat{x}_i) = \big(F_i(g_1 \widehat{x}_i),\, F_i(g_2 \widehat{x}_i),\, \cdots,\, F_i(g_{|G|} \widehat{x}_i)\big).
\end{equation}
With this construction, each layer satisfies the desired equivariance:
\[
\widehat{F}_i(g \widehat{x}) = g \cdot \widehat{F}_i(\widehat{x}), \quad \forall g \in G.
\]
Importantly, due to shared weights across different group-transformed branches, this approach introduces no increase in model parameters. For further demonstration, in Figure \ref{fig: layer by layer equinet} we also present a concrete structure of three-layer $C_4-$equivarification that applies to MNIST dataset, including greyscale images of size $28\times 28$ for handwritten digits $0\sim 9$.

\begin{figure}[h]
\centering
\begin{tikzcd}
    X_0 = \widehat{X}_0 \arrow[r, "\widehat{F}_0"] \arrow[dr, "F_0"]  &  \widehat{X}_1 \arrow[r, "\widehat{F}_1"] \arrow[d, "p_1"] \arrow[dr, "F_1"]  &  \cdots \arrow[r, "\widehat{F}_{i-1}"] & \widehat{X}_i \arrow[r, "\widehat{F}_i"] \arrow[dr, "F_i"]&  \widehat{X}_{i+1}\arrow[d, "p_{i+1}"] \arrow[r, "\widehat{F}_{i+1}"] \arrow[dr, "F_{i+1}"] & \cdots \arrow[r, "\widehat{F}_{n-1}"]  & \widehat{X}_n \arrow[r, "\widehat{F}_n"] \arrow[dr, "F_n"] & \widehat{X}_{n+1} \arrow[d, "p_n"] \\
    & X_1 &  \dots & & X_{i+1} & \cdots & & X_{n+1} 
\end{tikzcd}
\caption{Structure of layer-by-layer equivarification. Each layer performs both a standard mapping $F_i$ and its lifted equivariant version $\widehat{F}_i$, followed by projection $p_i: X_i \to \widehat{X}_i$.}
\label{fig:layer by layer}
\end{figure}

\begin{figure}
\centering
\begin{tikzpicture}[scale = 1.1, every node/.style={scale=0.9}]
\draw (0,0) rectangle ++ (1,1) (0.5, 0.5) node {$X_0$}; 
\draw (3,0) rectangle ++ (1,1) (3.5, 0.5) node {$X_1$} (3.5, 1.25) node {$\times$};
\draw (3,1.5) rectangle ++ (1,1) (3.5,2) node {$X_1$} (3.5, 2.75) node {$\times$};
\draw (3,3) rectangle ++ (1,1) (3.5,3.5) node {$X_1$} (3.5, 4.25) node {$\times$};
\draw (3,4.5) rectangle ++ (1,1) (3.5, 5) node {$X_1$} (3.5, 6) node {$\widehat X_1$};
\draw[blue!50] (2.8, -0.2) rectangle ++ (1.4,6);
\draw[->] (1,0.5) -- (3,0.5) node[pos = 0.5]{conv1};
\draw[->] (1,0.5) -- (3,2) node[pos = 0.5]{conv1$\circ g^{-1}$};;
\draw[->] (1,0.5) -- (3,3.5) node[pos = 0.5]{conv1$\circ g^{-2}$};
\draw[->] (1,0.5) -- (3,5) node[pos = 0.5]{conv1$\circ g^{-3}$};

\draw (3 + 3.5, 0 + 2) rectangle ++ (1,1) (3.5+ 3.5, 0.5+ 2) node {$X_2$} (3.5+ 3.5, 1.25+ 2) node {$\times$};
\draw (3+ 3.5,1.5+ 2) rectangle ++ (1,1) (3.5+ 3.5,2+ 2) node {$X_2$} (3.5+ 3.5, 2.75+ 2) node {$\times$};
\draw (3+ 3.5,3+ 2) rectangle ++ (1,1) (3.5+ 3.5,3.5+ 2) node {$X_2$} (3.5+ 3.5, 4.25+ 2) node {$\times$};
\draw (3+ 3.5,4.5+ 2) rectangle ++ (1,1) (3.5+ 3.5, 5+ 2) node {$X_2$} (3.5+ 3.5, 6+ 2) node {$\widehat X_2$};
\draw[blue!50] (2.8+ 3.5, -0.2+ 2) rectangle ++ (1.4,6);
\draw[->] (1+ 3.25,0.5+ 2) -- (3+ 3.5,0.5+ 2) node[pos = 0.5]{conv2};
\draw[->] (1+ 3.25,0.5+ 2) -- (3+ 3.5,2+ 2) node[pos = 0.5]{conv2$\circ g^{-1}$};;
\draw[->] (1+ 3.25,0.5+ 2) -- (3+ 3.5,3.5+ 2) node[pos = 0.5]{conv2$\circ g^{-2}$};
\draw[->] (1+ 3.25,0.5+ 2) -- (3+ 3.5,5+ 2) node[pos = 0.5]{conv2$\circ g^{-3}$};

\draw (3 + 3.5 + 3, 0 + 2 + 2) rectangle ++ (1,1)  (3.5+ 3.5+3, 0.5+ 2+2) node {$X_3$};
\draw[->] (1+ 3.5+3.25,0.5+ 2+2) -- (3+ 3.5+3,0.5+ 2+2) node[pos = 0.5]{conv3};
\draw (3 + 3.5 + 3+1+1.5, 0 + 2 + 2) rectangle ++ (1,1)  (3.5+ 3.5+3+1.5+1, 0.5+ 2+2) node {$X_4$};
\draw[->] (1+ 3.5+3.25+2.75,0.5+ 2+2) -- (3+ 3.5+3+2.5,0.5+ 2+2) node[pos = 0.5]{pool};

\draw (3 + 3.5 + 3+1+1.5 + 2.5, 0 + 2 + 2) rectangle ++ (1,1)  (3.5+ 3.5+3+1.5+1+ 2.5, 0.5+ 2+2) node {$X_5$}  (3.5+ 3.5+ 3+1+1.5 + 2.5, 1.25+ 2+2) node {$\times$};
\draw (3+ 3.5+ 3+1+1.5 + 2.5,1.5+ 2+ 2) rectangle ++ (1,1) (3.5+ 3.5+3+1.5+1+ 2.5,2+ 2+ 2) node {$X_5$} (3.5+ 3.5+ 3+1+1.5 + 2.5, 2.75+ 2+ 2) node {$\times$};
\draw (3+ 3.5+ 3+1+1.5 + 2.5,3+ 2+ 2) rectangle ++ (1,1) (3.5+ 3.5+3+1.5+1+ 2.5,3.5+ 2+ 2) node {$X_5$} (3.5+ 3.5+ 3+1+1.5 + 2.5, 4.25+ 2+ 2) node {$\times$};
\draw (3+ 3.5+ 3+1+1.5 + 2.5,4.5+ 2+ 2) rectangle ++ (1,1) (3.5+ 3.5+3+1.5+1+ 2.5, 5+ 2+ 2) node {$X_5$} (3.5+ 3.5+ 3+1+1.5 + 2.5, 6+ 2+ 2) node {$\widehat X_5 = \R^{40}$};
\draw[blue!50] (2.8+ 3.5+ 3+1+1.5 + 2.5, -0.2+ 2+ 2) rectangle ++ (1.4,6);

\draw[->] (1+ 3.5+3.25+2.75+ 2.5,0.5+ 2+2) -- (3+ 3.5+3+2.5+ 2.5,0.5+ 2+2) node[pos = 0.5]{dense};
\draw[->] (1+ 3.5+3.25,0.5+ 2+2) -- (3+ 3.5+3+2.5+ 2.5,0.5+ 2+2 + 1.5) node[pos = 0.5]{dense $\circ$ pool $\circ$ conv3 $\circ g^{-1}$};
\draw[->] (1+ 3.5+3.25,0.5+ 2+2) -- (3+ 3.5+3+2.5+ 2.5,0.5+ 2+2 + 3) node[pos = 0.5]{dense $\circ$ pool $\circ$ conv3 $\circ g^{-2}$};
\draw[->] (1+ 3.5+3.25,0.5+ 2+2) -- (3+ 3.5+3+2.5+ 2.5,0.5+ 2+2 + 4.5) node[pos = 0.5]{dense $\circ$ pool $\circ$ conv3 $\circ g^{-3}$};

\end{tikzpicture}
\caption{An equivariant CNN structure with three convolution layers on MNIST. Each layer is lifted via multi-view evaluation over group-transformed inputs. The final classification is obtained via argmax over $\hX_5 = \R^{40}$, which encodes 10 classes $\times$ 4 group elements. }
\label{fig: layer by layer equinet}
\end{figure}

\subsection{G-CNNs as a special case of equivarification}

We now compare our construction with that of G-CNNs. 
To do so, we first slightly generalize and reformulate the usual definition of a G-CNN.

Let $X$ and $Y$ be two spaces and assume, in particular, that $Y$ is a vector space (in fact it suffices for $Y$ to be an abelian group). 
A map 
\[
\kappa \in \op{Map}(G \times X, Y)
\]
is called a \emph{convolution kernel}.  
Given such a kernel, we define a $G$-equivariant map
\begin{equation}\label{eqn: GCNN reformulated}
\Phi : \op{Map}(G, X) \longrightarrow \op{Map}(G, Y),
\qquad
\Phi(f)(\varphi) = 
\sum_{\vartheta \in G} 
\kappa\bigl(\varphi^{-1}\vartheta,\, f(\vartheta)\bigr)
\quad \text{for all }\varphi \in G.
\end{equation}
This operation is the usual \emph{group convolution}.  
Here $\op{Map}(G, X)$ represents the $i$-th layer and 
$\op{Map}(G, Y)$ the $(i+1)$-st layer.

\begin{remark}
Since $f(\vartheta)\in X$, each term $\kappa(\varphi^{-1}\vartheta,f(\vartheta))$ lies in $Y$.  
In practice, $\kappa(\cdot,\cdot)$ is often nonlinear in the second argument (for example, it may include an activation function).  
This slightly generalizes the matrix multiplication in Formula~(11) of \cite{cohen2016group}.
\end{remark}

\begin{remark}
Equation~(10) in \cite{cohen2016group} treats the zeroth layer (referred to there as the first layer) separately. In that formulation, the zeroth layer map is the equivariantization of a single layer map of a standard CNN. We now explain how the zeroth layer can be viewed and treated in the same manner as the higher layers.

Let $X_0 = \op{Map}(D, \R)$ denote the zeroth layer, i.e., the space of images, where $D \subset \R^2$ (taken to be $\Z^2$ in \cite{cohen2016group}). In the reformulated version of the G-CNN, Equation~\ref{eqn: GCNN reformulated}, the zeroth layer is taken to be $\op{Map}(G, X_0)$. Given an image $x_0 \in X_0$, we associate to it a map $f_0: G \to X_0$ defined by
\[
f_0(\varphi) = \varphi(x_0),
\]
where $\varphi(x_0)$ denotes the image obtained by applying the group element $\varphi$ to $x_0$. Concretely, if $x_0$ is represented by a function $h: D \to \R$, then $\varphi(x_0) = h \circ \varphi^{-1}: D \to \R$.

In this formulation, the zeroth layer can be transformed in exactly the same way as the subsequent layers. In particular, one can apply a group convolution directly to the zeroth layer if desired.
\end{remark}

\begin{remark}
Any $G$-equivariant map 
$\op{Map}(G,X)\to \op{Map}(G,Y)$ 
is uniquely determined by a map 
$\op{Map}(G,X)\to Y$ according to Lemma~\ref{trivial lemma},
whereas a group convolution is determined by a kernel 
$G\times X\to Y$.  
Since $\op{Map}(G,X)$ is generally much larger than $G\times X$, most $G$-equivariant maps do \emph{not} arise from a group convolution.  
For example, pooling layers are not obtained from a group convolution.
\end{remark}

For completeness we verify $G$-equivariance of $\Phi$.  
We need to show
\[
\ell(\Phi f)=\Phi(\ell f)\qquad \forall\,\ell\in G.
\]
Evaluating at $\varphi\in G$, the left-hand side is
\[
(\ell(\Phi f))\varphi
=(\Phi f)(\ell^{-1}\varphi)
=\sum_{\vartheta\in G}\kappa(\varphi^{-1}\ell\vartheta,f(\vartheta)),
\]
while the right-hand side is
\[
(\Phi(\ell f))\varphi
=\sum_{\vartheta\in G}\kappa(\varphi^{-1}\vartheta,(\ell f)(\vartheta))
=\sum_{\vartheta\in G}\kappa(\varphi^{-1}\vartheta,f(\ell^{-1}\vartheta)) 
=\sum_{\xi\in G}\kappa(\varphi^{-1}\ell\xi,f(\xi)),
\]
where the last equality uses the change of variable $\xi=\ell^{-1}\vartheta$.  
Hence the two sides agree.

\medskip
\noindent\textbf{Comparison with Our Constructions.}

\smallskip
\emph{First construction.}  
Given $F:X\to Y$, we build a map $\widehat{X}\to\widehat{Y}$ as in Figure~\ref{fig:vanilla}. This is a special case of group convolution.
\begin{figure}[h]
\centering
\begin{tikzcd}
\widehat{X} \arrow[r,"\widehat{F\circ p_1}"] \arrow[dr,swap,sloped, "F\circ p_1"']  \arrow[d,swap, "p_1"]
& \widehat{Y}\arrow[d,"p_2"]\\
X\arrow[r,"F"]&Y
\end{tikzcd}
\caption{For any $F:X\to Y$, we construct a map $\widehat{X}\to\widehat{Y}$.}
\label{fig:vanilla}
\end{figure}
This layer map $\widehat{F\circ p_1}$ arises from the convolution kernel 
\[
\kappa:G\times X\to Y,\qquad 
\kappa(e,x)=F(x)\;\text{ for all }x\in X,\quad 
\kappa(\varphi,x)=0\;\text{ for }\varphi\neq e.
\]
Indeed, for $f\in\widehat{X}=\op{Map}(G,X)$,
\[
\Phi(f)(\varphi)
=\sum_{\vartheta\in G}\kappa(\varphi^{-1}\vartheta,f(\vartheta))
=\kappa(e,f(\varphi))
=F(f(\varphi))
=\widehat{F\circ p_1}(f)(\varphi).
\]

\smallskip
\emph{Second construction.}  
(We call this “layer-by-layer equivarification,” though the name may be refined.)  
By Lemma~\ref{trivial lemma}, to construct a $G$-equivariant map 
$\op{Map}(G,X)\to\op{Map}(G,Y)$ 
it suffices to define a map 
$\op{Map}(G,X)\to Y$. We know explain that a group convolution is a special case of this.

Given a convolution kernel $\kappa:G\times X\to Y$, define 
\[
\Theta:\op{Map}(G,X)\to Y,
\qquad
\Theta(f)=\sum_{\vartheta\in G}\kappa(\vartheta,f(\vartheta)),
\]
i.e.\ the “trace” of $$\kappa(\cdot,f(\cdot)):G\times G\to Y.$$
Equivarifying $\Theta$ yields
\[
\widehat{\Theta}:\op{Map}(G,X)\to\op{Map}(G,Y),
\quad
\widehat{\Theta}(f)(\varphi)
=\Theta(\varphi^{-1}f)
=\sum_{\vartheta\in G}\kappa(\vartheta,f(\varphi\vartheta))
=\sum_{\xi\in G}\kappa(\varphi^{-1}\xi,f(\xi)),
\]
where the last equality uses the change of variable $\xi=\varphi\vartheta$.  
This is exactly the group convolution.  
In short, \emph{group convolution is the equivarification of the trace associated with the convolution kernel}.

\smallskip
\emph{Pooling.}  
Finally, using the same $F:X\to Y$ as above, and suppose $Y = \R$, define 
\[
\Psi:\op{Map}(G,X)\to Y,
\qquad
\Psi(f)=\max_{\vartheta\in G}F\bigl(f(\vartheta)\bigr),
\]
which corresponds to a max-pooling layer.  
In genera,l such a map is \emph{not} a special case of a group convolution.

\section{Numerical experiments} \label{sec:experiments}

In this section, we demonstrate the training process of our equivarified networks and compare them with the data-augmented counterparts. For convenience of discussion, we focus on the global equivarification approach \eqref{eq:equinet global}, as our primary experiments have shown that the layer-wise construction \eqref{eq:equinet layer by layer} exhibits similar performance to the global version under a comparable number of parameters.\footnote{We refer the sample codes to:
\url{https://github.com/symplecticgeometry/equivariant-neural-networks-and-equivarification}}

\subsection{Label and loss function}

We explain how we encode labels when training the equivariant network under a finite symmetry group \( G \). Suppose the input data consists of \( k \) classes, and let \( X_{n+1} = \mathbb{R}^k \) denote the label space. Each data point \( x_0 \) from class \( m \in \{1,\dots,k\} \) is assigned a one-hot label
\[
y = (0, \dots, 0, \underbrace{1}_{m\text{-th position}}, 0, \dots, 0) \in \mathbb{R}^{k}.
\]
In this case, a classifier $F$ maps $X_0$ to $\R^k$. In standard data augmentation approaches, each transformed input \( g^{-1}x_0 \) is typically labeled with the same \( y = e_m \), aiming to enforce group \emph{invariance}. However, this setup fails to preserve the analytical group structure in the label space.

In contrast, our method is to construct a \emph{\( G \)-equivariant} classifier
\[
\widehat{F}: X_0 \to \widehat{X}_{n+1} = \mathbb{R}^{|G| \cdot k}
\]
by lifting the outputs across all transformed inputs:
\[
\widehat{F}(x_0) = \left( F(x_0), \, F(g_2 x_0), \, \dots, \, F(g_{|G|} x_0) \right),
\]
where \( g_1 = e \in G \) is the identity. The label is also lifted to a padded one-hot vector\footnote{When a general (possibly infinite) symmetry group $G$ is considered, our construction amounts to labeling $\hF(x)(e) = F(x)$ with $y$ and $\hF(x)(g)$, $g\neq e$, with $\mathbf{0}$. }:
\[
\widehat{y} = y \oplus 0 \oplus \cdots \oplus 0 \in \mathbb{R}^{|G| \cdot k},
\]
which can be interpreted as a \( |G| \)-tuple of vectors in \( \mathbb{R}^k \), with only the first component nonzero.

To maintain equivariance, the transformed input \( gx_0 \) is labeled by
\[
gx_0 \mapsto g \cdot \widehat{y},
\]
where the action of \( g \) on $\widehat{y}$ is implemented by tuple permutation, consistent with our network architecture (see Section~\ref{sec: Application to neural networks}).

\medskip
\begin{example}
Consider \( G = C_4 = \{e, g, g^2, g^3\} \), where \( g \) represents a 90° rotation when acting on images, and a permutation $g(a_1,a_2,a_3,a_4) = (a_2,a_3,a_4,a_1)$ when acting on $4-$tuples. Then we set up the equivariant labels for the input $x_0$ and its transformed versions,
\begin{equation}\label{eq: permuted label}
\begin{aligned}
x_0 &\mapsto \widehat{y} = y \oplus 0 \oplus 0 \oplus 0, \\
gx_0 &\mapsto g \cdot \widehat{y} = 0 \oplus 0 \oplus 0 \oplus y, \\
g^2 x_0 &\mapsto g^2 \cdot \widehat{y} = 0 \oplus 0 \oplus y \oplus 0, \\
g^3 x_0 &\mapsto g^3 \cdot \widehat{y} = 0 \oplus y \oplus 0 \oplus 0.
\end{aligned}
\end{equation}
\end{example} 

Our group-aware labeling enables the network not only to predict the input class but also to infer the applied transformation, a capability not present in standard invariant labeling. To be more specific, let $\widetilde{x_0}$ be a transformed version of $x_0$ and suppose that the largest component of $\hF(\widetilde{x_0})$ occurs in the copy $F(g\widetilde{x_0})$, then we can predict $\widetilde{x_0} = g^{-1}x_0$.

In our following experiments, we compare data augmentation and network equivarification based on a finite operation group $G$. For a $G-$augmented network, the training employs the loss function,
\[
\frac{1}{N}\sum^N_{i=1}\sum_{g\in G} l(F(g x_i),y_i),
\]
with $l$ being the cross-entropy loss and $y_i$ being the true one-hot label of $x_i$. The $G-$equivariant network is trained with the loss function,
\[
\frac{1}{N}\sum^N_{i=1}l(\hF(x_i),\widehat{y}_i),
\]
where the extended label $\widehat{y}_i$ takes the form $y_i\oplus 0\oplus\cdots\oplus 0$. Note that due to the group equivariance, we have $l(\hF(gx_i),g\cdot\widehat{y}_i) = l(g\cdot\hF(x_i),g\cdot\widehat{y}_i) = l(\hF(x_i),\widehat{y}_i)$.  The predicted label of data $x$ is given by $i_{max}\mod num\_ classes$, where $i_{max}$ is the index of the largest component of $\hF(x)$. Again, we emphasize that the equivarified network $\hF$, due to weight sharing, has the same number of parameters as the original model $F$. 

For fairness of comparison, in all test cases we apply the Adam optimizer with a learning rate $10^{-3}$ to update the network parameters. The training data are fed to the models with a batch size of 100.

\subsection{MNIST}

We compare the data augmentation approach and our network equivarification on the MNIST dataset, which contains 70,000 grayscale images of handwritten digits 0 $\sim$ 9, with 60,000 used for training and 10,000 for testing. To improve generalization and prevent overfitting, we further divide the training data into a training set of 50,000 and a validation set of 10,000. All images are normalized with the mean of $0.1307$ and the standard deviation of $0.3081$. As a baseline model, we adopt LeNet-5 with approximately 60K parameters. All networks are trained 10 epochs (5000 iterations).

It is observed from Table \ref{tab:mnist results} that both network equivarification and data augmentation yield comparable performance on the MNIST data set, effectively enhancing the generalizability of the vanilla CNN when test images undergo random transformations. While the equivariant networks better preserve the accuracy when the transformations are drawn from the same group (e.g., $C_4$ or $D_4$) used in training, the data-augmented counterparts exhibit slightly better robustness when the test images are subject to random-angle rotations beyond the group. These results indicate a practical trade-off: while group-equivariant networks offer strong performance under structured, known symmetries, data augmentation may sometimes provide better robustness under broader, less structured transformations. 

Figure~\ref{fig: check equivariance} illustrates the softmax output of our $C_4$-equivariant network for a digit “7” image under $0^\circ$, $90^\circ$, $180^\circ$, and $270^\circ$ rotations. The predicted vectors exhibit exact cyclic permutation under the group action, verifying that the learned network satisfies the $C_4$-equivariance property by construction.

\begin{table}[h!]
    \small
    \setstretch{1.2}
    \centering
    \begin{tabular}{|c||c|c|c|c|c|}
    \hline
     \diagbox{Test Acc}{Models}  & Vanilla CNN & $C_4-$Aug. & $C_4-$Equiv. &  $D_4-$Aug. & $D_4-$Equiv.\\
     \hline 
     Test set original & $98.7\%$& $98.2\%$ & $98.7\%$ & $96.3\%$ & $97.5\%$ \\
     \hline
     Random $C_4$ transform on test set & $41.5\%$ & $98.1\%$ & $98.7\%$ & $96.1\%$ & $97.5\%$ \\
     \hline
     Random $D_4$ transform on test set & $36.7\%$ & $70.7\%$& $66.7\%$& $96.2\%$ & $97.5\%$ \\
     \hline
     Random rotation ($0^\circ\sim360^\circ$) on test set & $39.6\%$ & $86.8\%$ & $85.6\%$& $83.6\%$& $82.5\%$\\
     \hline
    \end{tabular}
    \caption{MNIST test accuracy. Vanilla CNN (no augmentation) vs. group-augmented CNN vs. group-equivarified CNN. ``Aug." denotes data augmentation, ``Equiv." denotes group-equivariant networks.  Results average 10 independent evaluations.}
    \label{tab:mnist results}
\end{table}

\begin{figure}[h]
\centering
\begin{minipage}{0.2\textwidth}
    \includegraphics[width=\textwidth]{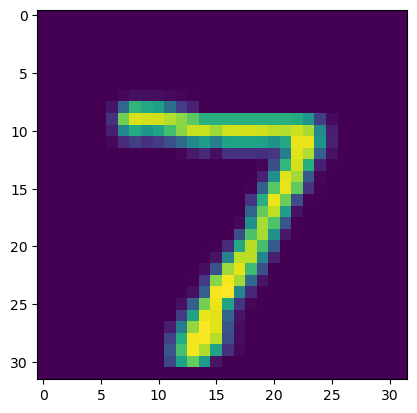}
    \subcaption{digit $7$, angle $0^\circ$}
\end{minipage}
\quad
\begin{minipage}{0.2\textwidth}
\tiny
\begin{align*}
    [&3.0238e-09,\ 4.4535e-07,\ 1.1168e-07,\ 8.1438e-07,\ 2.1626e-09, \ 2.4075e-10,\\
         &2.7370e-12, \ \boxed{9.9999e-01}, \ 6.0974e-09, \ 4.9288e-06, \ 8.9081e-14, 7.5260e-15,\\
         &2.6949e-15, \ 9.8918e-16, \ 3.8059e-16, \ 6.8213e-13, \ 4.5590e-15, \ 8.7373e-13,\\
         &4.2735e-13,\ 1.2758e-15,\ 1.8902e-14,\ 2.5661e-12,\ 1.8616e-14,\ 4.0692e-16,\\
         &2.4703e-12,\ 1.1981e-14,\ 6.5862e-11,\ 5.5811e-18,\ 9.7830e-16,\ 1.8903e-14,\\
         &2.9701e-12,\ 1.0142e-13,\ 1.5757e-10,\ 1.3272e-12,\ 2.3923e-09,\ 5.4502e-12,\\
         &9.8439e-12,\ 1.0966e-12,\ 1.0032e-12,\ 4.1531e-14]
\end{align*}
\end{minipage}
\\[1em]
\begin{minipage}{0.2\textwidth}
    \includegraphics[width=\textwidth]{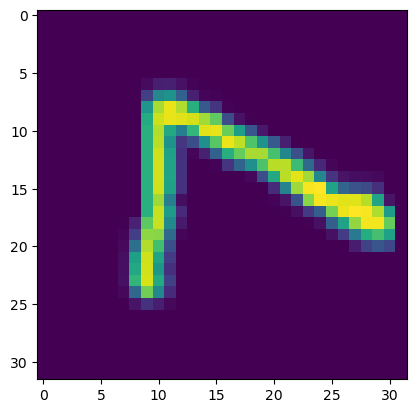}
        \subcaption{digit $7$, angle $90^\circ$}
\end{minipage}
\quad
\begin{minipage}{0.2\textwidth}
\tiny
\begin{align*}
    [&8.9081e-14,\ 7.5260e-15,\ 2.6949e-15,\ 9.8918e-16,\ 3.8059e-16, 6.8213e-13,\\
         &4.5590e-15,\ 8.7373e-13,\ 4.2735e-13,\ 1.2758e-15,\ 1.8902e-14,\ 2.5661e-12,\\
         &1.8616e-14,\ 4.0692e-16,\ 2.4703e-12,\ 1.1981e-14,\ 6.5862e-11,\ 5.5811e-18,\\
         &9.7830e-16,\ 1.8903e-14,\ 2.9701e-12,\ 1.0142e-13,\ 1.5757e-10,\ 1.3272e-12,\\
         &2.3923e-09,\ 5.4502e-12,\ 9.8439e-12,\ 1.0966e-12,\ 1.0032e-12,\ 4.1531e-14,\\
         &3.0238e-09,\ 4.4535e-07,\ 1.1168e-07,\ 8.1438e-07,\ 2.1626e-09,\ 2.4075e-10,\\
         &2.7370e-12,\ \boxed{9.9999e-01},\ 6.0974e-09,\ 4.9288e-06]
\end{align*}
\end{minipage}
\\[1em]
\begin{minipage}{0.2\textwidth}
    \includegraphics[width=\textwidth]{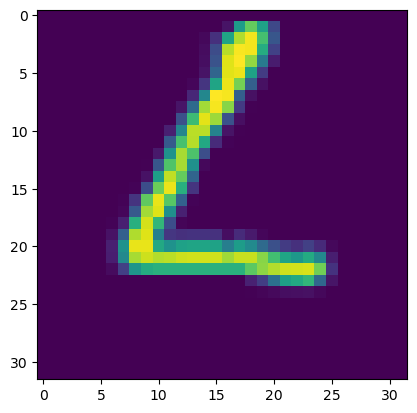}
        \subcaption{digit $7$, angle $180^\circ$}
\end{minipage}
\quad
\begin{minipage}{0.2\textwidth}
\tiny
\begin{align*}
    [&1.8902e-14,\ 2.5661e-12,\ 1.8616e-14,\ 4.0692e-16,\ 2.4703e-12,\ 1.1981e-14,\\
         &6.5862e-11,\ 5.5811e-18,\ 9.7830e-16,\ 1.8903e-14,\ 2.9701e-12,\ 1.0142e-13,\\
         &1.5757e-10,\ 1.3272e-12,\ 2.3923e-09,\ 5.4502e-12,\ 9.8439e-12,\ 1.0966e-12,\\
         &1.0032e-12,\ 4.1531e-14,\ 3.0238e-09,\ 4.4535e-07,\ 1.1168e-07,\ 8.1438e-07,\\
         &2.1626e-09,\ 2.4075e-10,\ 2.7370e-12,\ \boxed{9.9999e-01},\ 6.0974e-09,\ 4.9288e-06,\\
         &8.9081e-14,\ 7.5260e-15,\ 2.6949e-15,\ 9.8918e-16,\ 3.8059e-16,\ 6.8213e-13,\\
         &4.5590e-15,\ 8.7373e-13,\ 4.2735e-13,\ 1.2758e-15]
\end{align*}
\end{minipage}
\\[1em]
\begin{minipage}{0.2\textwidth}
    \includegraphics[width=\textwidth]{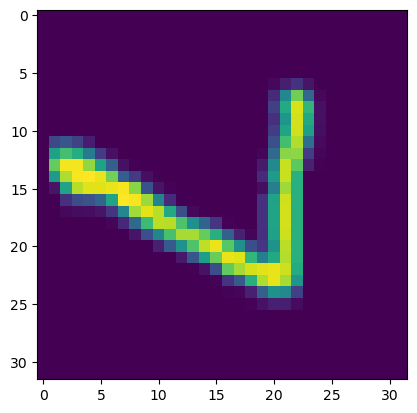}
        \subcaption{digit $7$, angle $270^\circ$}
\end{minipage}
\quad
\begin{minipage}{0.2\textwidth}
\tiny
\begin{align*}
    [&2.9701e-12,\ 1.0142e-13,\ 1.5757e-10,\ 1.3272e-12,\ 2.3923e-09,\ 5.4502e-12,\\
         &9.8439e-12,\ 1.0966e-12,\ 1.0032e-12,\ 4.1531e-14,\ 3.0238e-09,\ 4.4535e-07,\\
         &1.1168e-07,\ 8.1438e-07,\ 2.1626e-09,\ 2.4075e-10,\ 2.7370e-12, \ \boxed{9.9999e-01},\\
         &6.0974e-09,\ 4.9288e-06,\ 8.9081e-14,\ 7.5260e-15,\ 2.6949e-15,\ 9.8918e-16,\\
         &3.8059e-16,\ 6.8213e-13,\ 4.5590e-15,\ 8.7373e-13,\ 4.2735e-13,\ 1.2758e-15,\\
         &1.8902e-14,\ 2.5661e-12,\ 1.8616e-14,\ 4.0692e-16,\ 2.4703e-12,\ 1.1981e-14,\\
         &6.5862e-11,\ 5.5811e-18,\ 9.7830e-16,\ 1.8903e-14]
\end{align*}
\end{minipage}
\caption{Right: rotated images. Left: probability vectors in $\R^{40}$ obtained by applying softmax function to logit data. The largest component in each output vector is boxed, indicating the label and the transform. The results verify that when the input image is transformed, the output of our equivariant network is exactly equal to the permutation of that corresponding to the untransformed image (see \eqref{eq: permuted label}).}
\label{fig: check equivariance}

\end{figure}

We further evaluate the performance of different models when the size of the training set is reduced to $20{,}000$ images (from the full $60{,}000$ in MNIST). In this low-data regime, the benefits of group-equivariant networks become more pronounced, particularly when using the more expressive $D_4$ group. This improvement may be attributed to the multi-view strategy employed by our equivarification framework, which effectively increases the model’s exposure to structured variations without requiring additional labeled data. By explicitly modeling symmetry, equivariant networks alleviate the burden on representation learning in data-scarce settings.

\begin{table}[h!]
    \small
    \setstretch{1.2}
    \centering
    \begin{tabular}{|c||c|c|c|c|c|}
    \hline
     \diagbox{Test Acc}{Models}  & Vanilla CNN & $C_4-$Aug. & $C_4-$Equiv. &  $D_4-$Aug. & $D_4-$Equiv.\\
     \hline 
     Test set original & $98.4\%$& $97.0\%$ & $97.9\%$ & $92.5\%$ & $96.6\%$ \\
     \hline
     Random $C_4$ transform on test set & $39.4\%$ & $97.0\%$ & $97.9\%$ & $92.5\%$ & $96.6\%$ \\
     \hline
     Random $D_4$ transform on test set & $33.1\%$ & $70.3\%$& $65.8\%$& $92.4\%$ & $96.6\%$ \\
     \hline
     Random rotation ($0^\circ\sim360^\circ$) on test set & $36.2\%$ & $83.7\%$ & $81.9\%$& $77.3\%$& $79.1\%$\\
     \hline
    \end{tabular}
    \caption{MNIST test accuracy under reduced training data (20,000 out of 60,000). Vanilla CNN (no augmentation) vs. group-augmented CNN vs. group-equivarified CNN. ``Aug." denotes data augmentation, ``Equiv." denotes group-equivariant networks.}
    \label{tab:mnist results}
\end{table}

\subsection{SVHN}

Similar to the MNIST task, the SVHN classification problem also targets digit recognition from 0 to 9. However, SVHN poses a greater challenge due to its more complex visual features, including real-world image noise, color variability, and background clutter. In our experiments, we use 63,257 training images, 10,000 for validation, and 10,000 for testing. All inputs are normalized using channel-wise statistics: mean $(0.4377, 0.4438, 0.4728)$ and standard deviation $(0.1980, 0.2010, 0.1970)$. We train all models using a batch size of 100 and 20 epochs (12,660 iterations).

In our first test, we continue to use LeNet5 as the baseline architecture, modifying its input layer to accommodate three color channels. As shown in Table~\ref{tab:svhn lenet results}, the vanilla CNN, trained without augmentation, performs well on the original test set, where most digits are upright and centered. However, it suffers a significant drop in accuracy when evaluated on transformed inputs, indicating a strong reliance on the orientation bias in the training data. In contrast, both group-augmented and group-equivariant models exhibit significantly better robustness under $C_4$ and $D_4$ transformations. This confirms the benefit of explicitly modeling geometric invariance for real-world image recognition tasks.

To further evaluate the scalability of our approach, we replace the backbone with a more expressive Mini-VGG architecture (approximately 200K parameters). Results are shown in Table \ref{tab:svhn Mini-VGG results}.  All models demonstrate improved accuracy compared to their LeNet5 counterparts. Notably, our equivarified networks consistently outperform data-augmented models under both $C_4$ and $D_4$ transformations. In particular, the $D_4$-equivariant model achieves $85.7\%$ on transformed test sets, compared to $80.7\%$ for LeNet5 with $D_4$-augmentation and $81.9\%$ for Mini-VGG with $D_4$-augmentation. These results demonstrate that our equivariant design scales well with larger architectures and more challenging image distributions.

\begin{table}[h!]
    \small
    \setstretch{1.2}
    \centering
    \begin{tabular}{|c||c|c|c|c|c|}
    \hline
     \diagbox{Test Acc}{Models}  & Vanilla CNN & $C_4-$Aug. & $C_4-$Equiv. &  $D_4-$Aug. & $D_4-$Equiv.\\
     \hline 
     Test set original & $88.8\%$& $81.1\%$ & $82.5\%$ & $77.4\%$ & $80.7\%$\\
     \hline
     Random $C_4$ transform on test set& $38.1\%$ & $80.9\%$ & $82.5\%$ & $76.7\%$ & $80.7\%$ \\
     \hline
     Random $D_4$ transform on test set & $31.4\%$ & $60.0\%$ & $59.5\%$ & $76.7\%$  & $80.7\%$\\
     \hline
     Random rotation ($0^\circ\sim360^\circ$) on test set & $28.1\%$ & $54.7\%$ & $53.5\%$ & $51.8\%$ & $51.1\%$\\
     \hline
    \end{tabular}
    \caption{SVHN test accuracy with LeNet5 baseline ($\approx$ 60K parameters), 20 epochs of training. Vanilla CNN (no augmentation) vs. group-augmented CNN vs. group-equivarified CNN. ``Aug." denotes data augmentation, ``Equiv." denotes group-equivariant networks.}
    \label{tab:svhn lenet results}
\end{table}

\begin{table}[h!]
    \small
    \setstretch{1.2}
    \centering
    \begin{tabular}{|c||c|c|c|c|c|}
    \hline
     \diagbox{Test Acc}{Models}  & Vanilla CNN & $C_4-$Aug. & $C_4-$Equiv. &  $D_4-$Aug. & $D_4-$Equiv.\\
     \hline 
     Test set original & $92.4\%$& $85.8\%$ & $87.1\%$ & $81.9\%$& $85.7\%$\\
     \hline
     Random $C_4$ transform on test set& $38.2\%$& $84.9\%$& $87.1\%$ & $81.6\%$&$85.7\%$\\
     \hline
     Random $D_4$ transform on test set &$32.9\%$ & $63.7\%$ & $63.1\%$& $81.4\%$ &$85.7\%$\\
     \hline
     Random rotation ($0^\circ\sim360^\circ$) on test set & $29.4\%$ & $59.7\%$& $56.7\%$ & $57.1\%$ & $54.8\%$\\
     \hline
    \end{tabular}
    \caption{SVHN test accuracy with Mini-VGG baseline ($\approx$ 200K parameters), 20 epochs of training. Vanilla CNN (no augmentation) vs. group-augmented CNN vs. group-equivarified CNN. ``Aug." denotes data augmentation, ``Equiv." denotes group-equivariant networks.}
    \label{tab:svhn Mini-VGG results}
\end{table}

\subsection{CIFAR10}

The CIFAR10 data set consists of colored images from 10 classes of objects: ``airplane'', ``automobile'', ``bird'', ``cat'', ``deer'', ``dog'', ``frog'', ``horse'', ``ship'', and ``truck''.
 Unlike MNIST or SVHN, achieving high test accuracy ($\sim$80--90\%) on CIFAR-10 generally requires deeper architectures, e.g. ResNet18 ($\approx$11.7M parameters) or VGG11 ($\approx$132M parameters), longer training schedules, and stronger regularization due to the greater visual complexity and intra-class variability. In our experiments, the input images are normalized with channel-wise mean $(0.4914, 0.4822, 0.4465)$ and standard deviation $(0.2471, 0.2435, 0.2616)$. We distribute 45,000, 5,000, and 10,000 images for training, validation, and test sets.

To highlight the comparison of different approaches, we begin with the relatively lightweight Mini-VGG baseline, which reaches reasonable convergence within 20 epochs. As shown in Table~\ref{tab:cifar10 Mini-VGG results}, both data augmentation and network equivarification significantly outperform vanilla CNN, consistent with the trends observed in the MNIST and SVHN experiments. In particular, the $C_4$-equivariant network achieves higher accuracy than its $C_4$-augmented counterpart on this more complex dataset.

Interestingly, models enhanced with $C_4$ transformations—either via data augmentation or equivariant design—perform comparably to those leveraging the larger $D_4$ group. This is not entirely surprising: the CIFAR-10 dataset already contains substantial variation in pose and orientation, such as left- and right-facing animals, making horizontal flips (a key component of $D_4$) partially redundant. Thus, expanding the symmetry group introduces additional inductive bias, but may not yield proportional gains when the data distribution already exhibits approximate invariance to those transformations. These results suggest that moderate transformation groups, such as $C_4$ in our case, may offer a favorable trade-off between model expressiveness, training efficiency, and inductive bias alignment with the data.

In Table \ref{tab:cifar10 resnet18 results}, we compare different group-reinforced approaches under the larger ResNet18 baseline, still trained with 20 epochs. In our experiments, it is observed that when applying our equivarified construction to deep architectures, using batch norms can lead to suboptimal performance due to their noisy calculations. To alleviate this, we replace batch norms with group norms, which normalizes over fixed groups of channels and is independent of batch size. Here we employ 32 channel groups for the layers except the first. This leads to more stable optimization and consistent accuracy gains in the equivariant ResNet18 models, as reflected in Table~\ref{tab:cifar10 resnet18 results}. The results confirm the improved accuracy of the equivariant networks constructed using heavier models.

\begin{table}[h!]
    \small
    \setstretch{1.2}
    \centering
     \begin{tabular}{|c||c|c|c|c|c|}
    \hline
     \diagbox{Test Acc}{Models}  & Vanilla CNN & $C_4-$Aug. & $C_4-$Equiv. &  $D_4-$Aug. & $D_4-$Equiv.\\
     \hline 
     Test set original & $77.2\%$ & $71.5\%$  & $74.5\%$ &$74.3\%$  & $74.6\%$ \\
     \hline
     Random $C_4$ transform on test set & $42.3\%$& $71.7\%$ & $74.5\%$  & $74.3\%$ & $74.6\%$   \\
     \hline
     Random $D_4$ transform on test set& $41.9\%$& $71.4\%$ & $74.1\%$ &   $73.9\%$ & $74.6\%$  \\
     \hline
     Random rotation ($0^\circ\sim360^\circ$) on test set& $32.7\%$& $47.1\%$  &  $47.2\%$& $44.6\%$  & $48.4\%$ \\
     \hline
    \end{tabular}
    \caption{CIFAR10 test accuracy with Mini-VGG baseline, 20 epochs of training. Vanilla CNN (no augmentation) vs. group-augmented CNN vs. group-equivarified CNN. ``Aug." denotes data augmentation, ``Equiv." denotes group-equivariant networks.}
    \label{tab:cifar10 Mini-VGG results}
\end{table}

\begin{table}[h!]
    \small
    \setstretch{1.2}
    \centering
    \begin{tabular}{|c||c|c|c|c|c|}
    \hline
     \diagbox{Test Acc}{Models}  & Vanilla CNN & $C_4-$Aug. & $C_4-$Equiv. &  $D_4-$Aug. & $D_4-$Equiv.\\
     \hline 
     Test set original & $81.0\%$ & $78.5\%$ & $78.8\%$ & $79.0\%$ & $79.3\%$  \\
     \hline
     Random $C_4$ transform on test set & $41.2\%$ & $78.2\%$  & $78.8\%$& $78.8\%$ & $79.3\%$   \\
     \hline
     Random $D_4$ transform on test set& $41.1\%$ & $77.7\%$ & $78.6\%$& $78.9\%$& $79.3\%$  \\
     \hline
     Random rotation ($0^\circ\sim360^\circ$) on test set& $30.4\%$ & $53.5\%$ & $49.5\%$& $52.8\%$& $51.0\%$ \\
     \hline
    \end{tabular}
    \caption{CIFAR10 test accuracy with ResNet18 baseline, 20 epochs of training. Vanilla CNN (no augmentation) vs. group-augmented CNN vs. group-equivarified CNN. ``Aug." denotes data augmentation, ``Equiv." denotes group-equivariant networks.}
    \label{tab:cifar10 resnet18 results}
\end{table}

\subsection{Group refinement and equivariance error}

In the preceding MNIST experiments, the $C_4$ (and $D_4$) group transformations emphasize equivariance under $90^\circ \cdot n$ ($n \in \mathbb{N}$) rotations. Here, we further investigate equivariant networks constructed using the $C_6$ and $C_8$ groups, which correspond to $60^\circ \cdot n$ and $45^\circ \cdot n$ rotations, respectively.

It is important to note that, in practice, $C_6$ and $C_8$ operations do not exactly preserve group structure due to numerical interpolation errors associated with non-orthogonal rotations. In contrast, $90^\circ$ rotations and horizontal/vertical flips in $C_4$ and $D_4$ can be implemented exactly on discrete image grids, thus avoiding interpolation. As a result, our multiview-based $C_6$- and $C_8$-equivarified constructions cannot maintain strict numerical equivariance.

To quantify this effect, we evaluate the relative equivariance error defined by
\[
\frac{1}{N|G|}\sum^{N}_{i=1}\sum_{g\in G}\frac{||\hF(gx)-g\hF(x)||^2}{||\hF(x)||^2},
\]
where $\hf$ denotes the network output and $g$ is a group action. Figure~\ref{fig:mnist equi err} shows the measured equivariance errors across different group constructions.

Due to the non-vanishing equivariance bias, the $C_6$- and $C_8$-equivarified models exhibit a slight drop in accuracy when tested under random group operations. However, since the relative equivariance error remains low, both models maintain strong test performance under their respective group transformations, as shown in Table~\ref{tab:mnist C468 results}.

These results highlight a key observation: numerical equivariance, even when imperfect, can still enhance model robustness, provided the equivariance error is controlled. This suggests that the success of equivariant design is not solely contingent on strict algebraic structure, but also on the approximate preservation of symmetry in practice.

\begin{figure}[h!]
    \centering
    \includegraphics[width=0.35\linewidth]{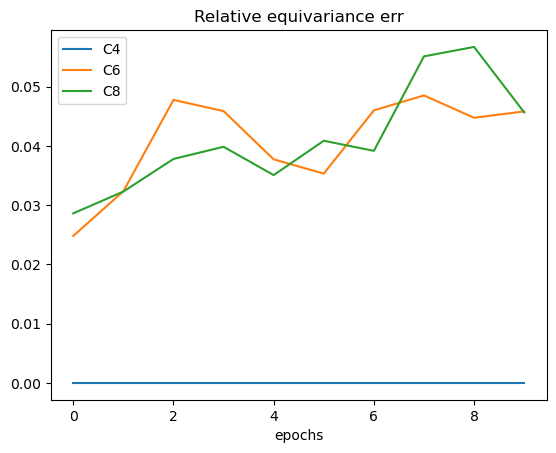}
    \caption{Relative equivariance errors on MNIST data set with respect to $C_4$, $C_6$, and $C_8$ operation groups.}
    \label{fig:mnist equi err}
\end{figure}

\begin{table}[h!]
    \small
    \setstretch{1.2}
    \centering
     \begin{tabular}{|c||c|c|c|}
    \hline
     \diagbox{Test Acc}{Models}   & $C_4-$Equiv. &  $C_6-$Equiv.  & $C_8-$Equiv. \\
     \hline 
     Test set original & $98.7\%$& $98.8\%$ & $98.7\%$ \\
     \hline
     Random $C_4$ transform on test set & $98.7\%$& $71.3\%$ & $98.7\%$ \\
     \hline
     Random $C_6$ transform on test set & $85.8\%$ & $96.3\%$ & $93.6\%$ \\
     \hline
     Random $C_8$ transform on test set & $82.4\%$ & $79.5\%$& $97.4\%$\\
     \hline
     Random rotation ($0^\circ\sim360^\circ$) on test set& $85.6\%$& $84.0\%$& $94.5\%$\\
     \hline
    \end{tabular}
    \caption{MNIST test accuracy. Comparison of equivariant models constructed using $C_4$, $C_6$, and $C_8$ rotation groups.}
    \label{tab:mnist C468 results}
\end{table}

\section{Conclusion}
We introduced a general equivarification method for constructing equivariant neural networks capable of effectively handling data with symmetries. Our approach is broadly applicable to arbitrary neural architectures and functions, utilizing group actions to uniformly define equivariant structure across layers—without requiring knowledge of the internal workings of each layer.

As a concrete example, we applied our method to a convolutional neural network for image classification. The resulting network achieves strong accuracy, while reducing both design and training complexity.

\appendix

\section{More on Equivarification}
In Section~\ref{sec: equivarification}, we defined $(\map(G, Z), p)$ as an example of a $G$-equivarification. Here, we show that it is “minimal” in the sense of satisfying a universal property.
\begin{lemma}[Universal Property]
Let $(\widehat Z', p')$ be any $G$-equivarification of $Z$. Then there exists a $G$-equivariant map $\pi: \widehat Z' \to \map(G, Z)$ such that $p' = p \circ \pi$.
Moreover, for any set $X$ and function $F: X \to Z$, the corresponding lifts $\widehat F: X \to \map(G, Z)$ and $\widehat F': X \to \widehat Z'$ satisfy
\[
\pi \circ \widehat F' = \widehat F.
\]
(See Figure~\ref{fig: factor thru}.)
\end{lemma}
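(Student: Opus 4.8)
The plan is to construct $\pi$ directly from the universal property of $(\widehat Z', p')$ itself, applied to a well-chosen map. Recall that $Z^{\times G}$ comes equipped with the $G$-action \eqref{action on functions} and projection $p(s)=s(e)$, and that for any $G$-set $X$ and any map $F\colon X\to Z$ there is a unique $G$-equivariant lift $\widehat F\colon X\to Z^{\times G}$ with $p\circ\widehat F=F$ (Lemma~\ref{trivial lemma}). First I would take $X=\widehat Z'$, which carries a $G$-action by hypothesis, and the map $F=p'\colon \widehat Z'\to Z$. Applying Lemma~\ref{trivial lemma} to $p'$ produces a $G$-equivariant map $\widehat{p'}\colon \widehat Z'\to Z^{\times G}$ with $p\circ\widehat{p'}=p'$. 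Setting $\pi:=\widehat{p'}$ immediately gives the first assertion $p'=p\circ\pi$ together with $G$-equivariance. Concretely one has $(\pi(z'))(g)=p'(g^{-1}z')$ for $z'\in\widehat Z'$ and $g\in G$, which is the formula I would write down and then verify equivariance of by the same two-line computation as in the proof of Lemma~\ref{trivial lemma}.

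For the second assertion, I would show that $\pi\circ\widehat F'$ is a $G$-equivariant lift of $F$ through $p$, and then invoke the uniqueness clause of Lemma~\ref{trivial lemma} to conclude $\pi\circ\widehat F'=\widehat F$. Equivariance of $\pi\circ\widehat F'$ is clear since both $\pi$ and $\widehat F'$ are $G$-equivariant (the latter by the definition of $G$-equivarification applied to $(\widehat Z',p')$). For the lifting property, compute $p\circ(\pi\circ\widehat F')=(p\circ\pi)\circ\widehat F'=p'\circ\widehat F'=F$, using $p\circ\pi=p'$ from the first part and $p'\circ\widehat F'=F$ from the definition of the lift $\widehat F'$. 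Hence $\pi\circ\widehat F'$ and $\widehat F$ are both $G$-equivariant lifts of $F$ through $p$, so they coincide by uniqueness.

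I do not anticipate a serious obstacle here: the statement is essentially a formal consequence of the universal property already established in Lemma~\ref{trivial lemma}, and the only mild subtlety is bookkeeping — making sure the map $p'$ is fed into Lemma~\ref{trivial lemma} with $\widehat Z'$ (not $Z$) playing the role of the source $G$-set, and keeping straight which of the two lifting properties (the one for $Z^{\times G}$ versus the one packaged into the definition of $(\widehat Z',p')$) is being used at each step. If anything needs care, it is simply confirming that the abstract Definition~\ref{lemma: equivarification} of $G$-equivarification does supply the lift $\widehat F'$ for the specific choice $X=Z$ and the given $F\colon X\to Z$, which it does by hypothesis.
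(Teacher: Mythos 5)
Your proposal is correct and essentially matches the paper's proof: the paper defines $\pi$ by the very formula $[\pi(\widehat z')](g)=p'(g^{-1}\widehat z')$ and verifies $p'=p\circ\pi$ and equivariance by hand, while you obtain the same map by applying Lemma~\ref{trivial lemma} to $F=p'$ with $X=\widehat Z'$, which is the same computation packaged differently. The second half (showing $\pi\circ\widehat F'$ is a $G$-equivariant lift of $F$ and invoking the uniqueness in Lemma~\ref{trivial lemma}) is exactly the paper's argument.
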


\begin{proof}
Define $\pi: \widehat Z' \to \map(G, Z)$ by
\[
[\pi(\widehat z')](g) := p'(g^{-1} \cdot \widehat z'), \quad \text{for all } g \in G.
\]
Then,
\[
p \circ \pi(\widehat z') = [\pi(\widehat z')](e) = p'(\widehat z').
\]
To check $G$-equivariance, let $h \in G$:
\[
[\pi(h \cdot \widehat z')](g) = p'(g^{-1} h \cdot \widehat z'), \quad [h \cdot \pi(\widehat z')](g) = [\pi(\widehat z')](h^{-1} g) = p'(g^{-1} h \cdot \widehat z').
\]
So $\pi$ is $G$-equivariant. Finally, since both $\pi \circ \widehat F'$ and $\widehat F$ are $G$-equivariant lifts of $F$, uniqueness (Lemma~\ref{trivial lemma}) implies $\pi \circ \widehat F' = \widehat F$.
\end{proof}



\begin{figure}
\centering
\begin{tikzcd}
 & & \widehat Z' 
 \arrow[ddl, bend left, "p'"]
\\
{} & \widehat Z \arrow[d, "p"] 
\arrow[ur,  "{\pi}", dashed]
&  \\
 X 
 \arrow[r, "F"] 
  \arrow[ur, "\widehat F"] 
 \arrow[uurr, bend left, "\widehat F'" ] 
 & Z &
\end{tikzcd}
\caption{Factorization through the universal equivarification.}
\label{fig: factor thru}
\end{figure}

\bibliographystyle{plain}
\bibliography{Ref}

@inproceedings{cohen2016group,
  title={Group equivariant convolutional networks},
  author={Cohen, Taco and Welling, Max},
  booktitle={International conference on machine learning},
  pages={2990--2999},
  year={2016},
  organization={PMLR}
}

@inproceedings{cohen2018spherical,
  title={Spherical CNNs},
  author={Cohen, Taco S and Geiger, Mario and K{\"o}hler, Jonas and Welling, Max},
  booktitle={International Conference on Learning Representations},
  year={2018}
}

@inproceedings{cohen2017steerable,
  title={Steerable CNNs},
  author={Cohen, Taco S and Welling, Max},
  booktitle={International Conference on Learning Representations},
  year={2017}
}

@article{weiler20183d,
  title={3d steerable cnns: Learning rotationally equivariant features in volumetric data},
  author={Weiler, Maurice and Geiger, Mario and Welling, Max and Boomsma, Wouter and Cohen, Taco S},
  journal={Advances in Neural information processing systems},
  volume={31},
  year={2018}
}

@article{cohen2019general,
  title={A general theory of equivariant cnns on homogeneous spaces},
  author={Cohen, Taco S and Geiger, Mario and Weiler, Maurice},
  journal={Advances in neural information processing systems},
  volume={32},
  year={2019}
}

@article{esteves2020theoretical,
  title={Theoretical aspects of group equivariant neural networks},
  author={Esteves, Carlos},
  journal={arXiv preprint arXiv:2004.05154},
  year={2020}
}

@article{lim2023equivariant,
  title={What is... an Equivariant Neural Network?},
  author={Lim, Lek-Heng and Nelson, Bradley J},
  journal={Notices of the American Mathematical Society},
  volume={70},
  number={04},
  year={2023},
  publisher={AMS}
}

@article{gibson2024equivariant,
  title={Equivariant neural networks and piecewise linear representation theory},
  author={Gibson, Joel and Tubbenhauer, Daniel and Williamson, Geordie},
  journal={arXiv preprint arXiv:2408.00949},
  year={2024}
}

@inproceedings{bernander2021replacing,
  title={Replacing data augmentation with rotation-equivariant CNNs in image-based classification of oral cancer},
  author={Bernander, Karl Bengtsson and Lindblad, Joakim and Strand, Robin and Nystr{\"o}m, Ingela},
  booktitle={Iberoamerican Congress on Pattern Recognition},
  pages={24--33},
  year={2021},
  organization={Springer}
}

@inproceedings{bernander2022rotation,
  title={Rotation-equivariant semantic instance segmentation on biomedical images},
  author={Bernander, Karl Bengtsson and Lindblad, Joakim and Strand, Robin and Nystr{\"o}m, Ingela},
  booktitle={Annual conference on medical image understanding and analysis},
  pages={283--297},
  year={2022},
  organization={Springer}
}

@article{pandya20232,
  title={E (2) equivariant neural networks for robust galaxy morphology classification},
  author={Pandya, Sneh and Patel, Purvik and Blazek, Jonathan and others},
  journal={arXiv preprint arXiv:2311.01500},
  year={2023}
}

@article{qiao2023scale,
  title={Scale-rotation-equivariant Lie group convolution neural networks (Lie group-CNNs)},
  author={Qiao, Wei-Dong and Xu, Yang and Li, Hui},
  journal={arXiv preprint arXiv:2306.06934},
  year={2023}
}

@inproceedings{bogatskiy2020lorentz,
  title={Lorentz group equivariant neural network for particle physics},
  author={Bogatskiy, Alexander and Anderson, Brandon and Offermann, Jan and Roussi, Marwah and Miller, David and Kondor, Risi},
  booktitle={International Conference on Machine Learning},
  pages={992--1002},
  year={2020},
  organization={PMLR}
}

@article{batzner20223,
  title={E (3)-equivariant graph neural networks for data-efficient and accurate interatomic potentials},
  author={Batzner, Simon and Musaelian, Albert and Sun, Lixin and Geiger, Mario and Mailoa, Jonathan P and Kornbluth, Mordechai and Molinari, Nicola and Smidt, Tess E and Kozinsky, Boris},
  journal={Nature communications},
  volume={13},
  number={1},
  pages={2453},
  year={2022},
  publisher={Nature Publishing Group UK London}
}

@article{favoni2022lattice,
  title={Lattice gauge equivariant convolutional neural networks},
  author={Favoni, Matteo and Ipp, Andreas and M{\"u}ller, David I and Schuh, Daniel},
  journal={Physical Review Letters},
  volume={128},
  number={3},
  pages={032003},
  year={2022},
  publisher={APS}
}

@article{yuan2024equivariant,
  title={Equivariant neural network force fields for magnetic materials},
  author={Yuan, Zilong and Xu, Zhiming and Li, He and Cheng, Xinle and Tao, Honggeng and Tang, Zechen and Zhou, Zhiyuan and Duan, Wenhui and Xu, Yong},
  journal={Quantum Frontiers},
  volume={3},
  number={1},
  pages={8},
  year={2024},
  publisher={Springer}
}

@article{gong2023general,
  title={General framework for E (3)-equivariant neural network representation of density functional theory Hamiltonian},
  author={Gong, Xiaoxun and Li, He and Zou, Nianlong and Xu, Runzhang and Duan, Wenhui and Xu, Yong},
  journal={Nature Communications},
  volume={14},
  number={1},
  pages={2848},
  year={2023},
  publisher={Nature Publishing Group UK London}
}

@article{tang2024deep,
  title={A deep equivariant neural network approach for efficient hybrid density functional calculations},
  author={Tang, Zechen and Li, He and Lin, Peize and Gong, Xiaoxun and Jin, Gan and He, Lixin and Jiang, Hong and Ren, Xinguo and Duan, Wenhui and Xu, Yong},
  journal={Nature Communications},
  volume={15},
  number={1},
  pages={8815},
  year={2024},
  publisher={Nature Publishing Group UK London}
}

@article{van2023equireact,
  title={EquiReact: an equivariant neural network for chemical reactions},
  author={van Gerwen, Puck and Briling, Ksenia R and Bunne, Charlotte and Somnath, Vignesh Ram and Laplaza, Ruben and Krause, Andreas and Corminboeuf, Clemence},
  journal={Preprint at https://arxiv. org/abs/2312.08307 v2},
  year={2023}
}

@article{cremer2023equivariant,
  title={Equivariant graph neural networks for toxicity prediction},
  author={Cremer, Julian and Medrano Sandonas, Leonardo and Tkatchenko, Alexandre and Clevert, Djork-Arn{\'e} and De Fabritiis, Gianni},
  journal={Chemical Research in Toxicology},
  volume={36},
  number={10},
  pages={1561--1573},
  year={2023},
  publisher={ACS Publications}
}

@article{seo2025se,
  title={SE (3)-equivariant Robot Learning and Control: A Tutorial Survey},
  author={Seo, Joohwan and Yoo, Soochul and Chang, Junwoo and An, Hyunseok and Ryu, Hyunwoo and Lee, Soomi and Kruthiventy, Arvind and Choi, Jongeun and Horowitz, Roberto},
  journal={International Journal of Control, Automation and Systems},
  volume={23},
  number={5},
  pages={1271--1306},
  year={2025},
  publisher={Springer}
}

@inproceedings{hutchinson2021lietransformer,
  title={Lietransformer: Equivariant self-attention for lie groups},
  author={Hutchinson, Michael J and Le Lan, Charline and Zaidi, Sheheryar and Dupont, Emilien and Teh, Yee Whye and Kim, Hyunjik},
  booktitle={International conference on machine learning},
  pages={4533--4543},
  year={2021},
  organization={PMLR}
}

@article{white2022equivariant,
  title={Equivariant transduction through invariant alignment},
  author={White, Jennifer C and Cotterell, Ryan},
  journal={arXiv preprint arXiv:2209.10926},
  year={2022}
}

@article{mondal2023equivariant,
  title={Equivariant adaptation of large pretrained models},
  author={Mondal, Arnab Kumar and Panigrahi, Siba Smarak and Kaba, Oumar and Mudumba, Sai Rajeswar and Ravanbakhsh, Siamak},
  journal={Advances in Neural Information Processing Systems},
  volume={36},
  pages={50293--50309},
  year={2023}
}

@inproceedings{basu2023equi,
  title={Equi-tuning: Group equivariant fine-tuning of pretrained models},
  author={Basu, Sourya and Sattigeri, Prasanna and Ramamurthy, Karthikeyan Natesan and Chenthamarakshan, Vijil and Varshney, Kush R and Varshney, Lav R and Das, Payel},
  booktitle={Proceedings of the AAAI Conference on Artificial Intelligence},
  volume={37},
  number={6},
  pages={6788--6796},
  year={2023}
}

\end{document}